\newcommand{\SE}[1]{\text{SE}(#1)}
\newcommand{\Xgoal}{{X_\text{goal}}}
\newcommand{\no}{{n_\text{o}}}
\newcommand{\bel}{b}
\newcommand{\Tabs}[1]{{T_{\text{#1}}}}
\newcommand{\Trelative}[2]{{{}^{#1} T_{#2}}}
\newcommand{\Tsupport}{\Tabs{\text{sup}}}
\newcommand{\sinvalid}{s_\text{invalid}}
\newcommand{\qlattice}{q_\text{lat}}
\newcommand{\Qlattice}{Q_\text{lat}}
\newcommand{\slattice}{s_\text{lat}}
\newcommand{\Slattice}{S_\text{lat}}
\newcommand{\alattice}{a_\text{lat}}
\newcommand{\Alattice}{A_\text{lat}}
\newcommand{\Tlattice}{T_\text{lat}}
\newcommand{\Rlattice}{R_\text{lat}}
\newcommand{\Omegalattice}{\Omega_\text{lat}}
\newcommand{\pilattice}{\pi_\text{lat}}
\newcommand{\Vlattice}{V_\text{lat}}
\newcommand{\Srel}{S_\text{rel}}
\newcommand{\Trel}{T_\text{rel}}
\newcommand{\Rrel}{R_\text{rel}}
\newcommand{\Vrel}{V_\text{rel}}
\newcommand{\Omegarel}{\Omega_\text{rel}}
\newcommand{\xr}{{x_\text{r}}}
\newcommand{\yr}{{y_\text{r}}}
\newcommand{\thetar}{{\theta_\text{r}}}
\newcommand{\Xr}{{X_\text{r}}}
\newcommand{\Xrlattice}{{X_\text{r,lat}}}
\newcommand{\xorel}{{x_\text{o,rel}}}
\newcommand{\xo}{{x_\text{o}}}
\newcommand{\yo}{{y_\text{o}}}
\newcommand{\thetao}{{\theta_\text{o}}}
\newcommand{\Xo}{{X_\text{o}}}
\definecolor{mdp}{HTML}{FC9272}
\definecolor{sarsop}{HTML}{A1D99B}
\definecolor{despot}{HTML}{BEAED4}
\definecolor{latdespot}{HTML}{FFD0A2}
\newcommand{\feasibleswatch}{%
  \tikz{
    \node[fill=white, inner sep=2pt, rotate=45]{};
    \draw[green, very thick](-0.15, 0)--(0.15, 0);
  }%
}
\newcommand{\infeasibleswatch}{%
  \tikz{
    \node[fill=white, inner sep=2pt, rotate=45]{};
    \draw[red, very thick](-0.15, 0)--(0.15, 0);
  }%
}
\newcommand{\unevaluatedswatch}{%
  \tikz{
    \node[fill=white, inner sep=2pt, rotate=45]{};
    \draw[lightgray, very thick](-0.15, 0)--(0.15, 0);
  }%
}
\newcommand{\mdpswatch}{%
  \tikz{\node[fill=mdp, draw=black]{};}
  \tikz{
    \node[fill=white, inner sep=2pt, rotate=45]{};
    \draw[mdp, very thick](-0.15, 0)--(0.15, 0);
  }%
}
\newcommand{\liftmdpswatch}{%
  \tikz{\node[fill=mdp, draw=black, postaction={
    pattern=north east lines,pattern color=black}]{};}
  \tikz{
    \node[fill=mdp, inner sep=2pt, rotate=45]{};
    \draw[mdp, very thick](-0.15, 0)--(0.15, 0);
  }%
}
\newcommand{\sarsopswatch}{%
  \tikz{\node[fill=sarsop, draw=black]{};}
  \tikz{
    \node[fill=white, inner sep=2pt, rotate=45]{};
    \draw[sarsop, very thick](-0.15, 0)--(0.15, 0);
  }%
}
\newcommand{\liftsarsopswatch}{%
  \tikz{\node[fill=sarsop, draw=black, postaction={
    pattern=north east lines,pattern color=black}]{};}
  \tikz{
    \node[fill=sarsop, inner sep=2pt, rotate=45]{};
    \draw[sarsop, very thick](-0.15, 0)--(0.15, 0);
  }%
}
\newcommand{\despotswatch}{%
  \tikz{\node[fill=despot, draw=black]{};}
  \tikz{
    \node[fill=white, inner sep=2pt, rotate=45]{};
    \draw[despot, very thick](-0.15, 0)--(0.15, 0);
  }%
}
\newcommand{\latdespotswatch}{%
  \tikz{\node[fill=latdespot, draw=black]{};}
  \tikz{
    \node[fill=white, inner sep=2pt, rotate=45]{};
    \draw[latdespot, very thick](-0.15, 0)--(0.15, 0);
  }%
}
\DeclareMathOperator*{\argmax}{\text{arg\,max}}
\DeclareMathOperator{\Rot}{\text{Rot}}
\DeclareMathOperator{\Trans}{\text{Trans}}
\DeclareMathOperator{\Project}{\text{Proj}}
\DeclareMathOperator{\diag}{\text{diag}}
\DeclareMathOperator{\Prob}{\text{Pr}}
\DeclareMathOperator{\uniform}{\text{uniform}}
\newtheorem{theorem}{Theorem}
\DeclareRobustCommand{\ssnote}[1]{\ifthenelse{\boolean{include-notes}}%
 {\textcolor{red}{\bf SS: #1}}{}}
\DeclareRobustCommand{\mknote}[1]{\ifthenelse{\boolean{include-notes}}%
 {\textcolor{blue}{\bf MK: #1}}{}}
\begin{document}

\title{\LARGE Configuration Lattices for Planar Contact Manipulation Under Uncertainty}
\author{
  Michael C. Koval$^*$,
  David Hsu$^\dagger$,
  Nancy S. Pollard$^*$, and
  Siddhartha S. Srinivasa$^*$ \\
  {\small%
  \href{mailto:mkoval@cs.cmu.edu}{\tt mkoval@cs.cmu.edu},
  \href{mailto:dyhsu@comp.nus.edu.sg}{\tt dyhsu@comp.nus.edu.sg},
  \{%
    \href{mailto:nsp@cs.cmu.edu}{\tt nsp},
    \href{mailto:siddh@cs.cmu.edu}{\tt siddh}%
  \}\texttt{@cs.cmu.edu}
  } \\
  $^*$The Robotics Institute, Carnegie Mellon University \\
  $^\dagger$Department of Computer Science, National University of Singapore \\
}

\maketitle
\IEEEpeerreviewmaketitle

\begin{abstract}
  This work addresses the challenge of a robot using real-time feedback from
  contact sensors to reliably manipulate a movable object on a cluttered
  tabletop. We formulate contact manipulation as a partially observable Markov
  decision process (POMDP) in the joint space of robot configurations and
  object poses. The POMDP formulation enables the robot to actively gather
  information and reduce the uncertainty on the object pose. Further, it
  incorporates all major constraints for robot manipulation: kinematic
  reachability, self-collision, and collision with obstacles. To solve the
  POMDP, we apply DESPOT, a state-of-the-art online POMDP algorithm.  Our
  approach leverages two key ideas for computational efficiency. First, it
  performs lazy construction of a configuration-space lattice by interleaving
  construction of the lattice and online POMDP planning. Second, it combines
  online and offline POMDP planning by solving relaxed POMDP offline and using
  the solution to guide the online search algorithm. We evaluated the proposed
  approach on a seven degree-of-freedom robot arm in simulation environments.
  It significantly outperforms several existing algorithms, including some
  commonly used heuristics for contact manipulation under uncertainty.
\end{abstract}

\section{Introduction}
\label{sec:intro}
Our goal is to enable robots to use real-time feedback from contact sensors to
reliably manipulate their environment. Contact sensing is an ideal source of
feedback for a manipulator because contact is intimately linked with
manipulation: the sense of touch is unaffected by occlusion and directly
observes the forces that the robot imparts on its environment.

However, contact sensors suffer from a key limitation: they provide rich
information while in contact with an object, but little information otherwise.
Fully utilizing contact sensors requires \emph{active sensing}. Consider the
robot shown in \cref{fig:intro} that is trying to push a bottle into its palm.
The robot is uncertain about the initial pose of the bottle, has only an
approximate model of physics, and receives feedback from noisy contact sensors
on its fingertips. The robot executes an \emph{information-gathering action} by
moving its hand laterally to force the bottle into one of its contact sensors.
Once the bottle is localized, it moves to achieve the goal.

In this paper, our goal is to autonomously generate policies that allow a robot
to use real-time feedback from contact sensors to reliably manipulate objects
under uncertainty. We specifically consider the problem of quasistatic
tabletop manipulation via pushing~\cite{lynch1992manipulation}.  The robot's
goal is to push a movable object into a hand-relative goal region while
avoiding collision with stationary obstacles in the environment.

We formulate the \emph{planar contact manipulation problem} as a
\emph{partially observable Markov decision process}
(POMDP)~\cite{smallwood1973optimal,kaelbling1998planning} with a reward
function that drives the robot towards the goal. Prior work has successfully
used an offline POMDP solver~\cite{kurniawati2008sarsop} to find robust,
closed-loop policies for manipulating an object relative to the
hand~\cite{horowitz2013interactive,koval2015precontact_ijrr}.

Unfortunately, these hand-relative policies perform poorly when executed on a
robotic manipulator because of limited reachability, collision with the
environment, and other types of \emph{kinematic constraints}. These constraints
occur frequently during execution, but cannot be represented in a
\emph{hand-relative POMDP} (Rel-POMDP). In this paper, we address this
limitation by planning in a POMDP that includes both the fully-observable
configuration of the manipulator and the partially-observable pose of the
object.

Finding a near-optimal policy for this POMDP is challenging for two reasons.
First, a manipulator's configuration space is continuous and high-dimensional,
typically having at least six dimensions. Second, it is difficult to perform
pre-computation because the optimal policy may dramatically change when an
obstacle is added to, removed from, or moved within the environment.

\begin{figure}[t]%
  \centering%
  \begin{minipage}{0.5\columnwidth}%
    \begin{subfigure}[t]{\textwidth}%
      \includegraphics[width=\textwidth]{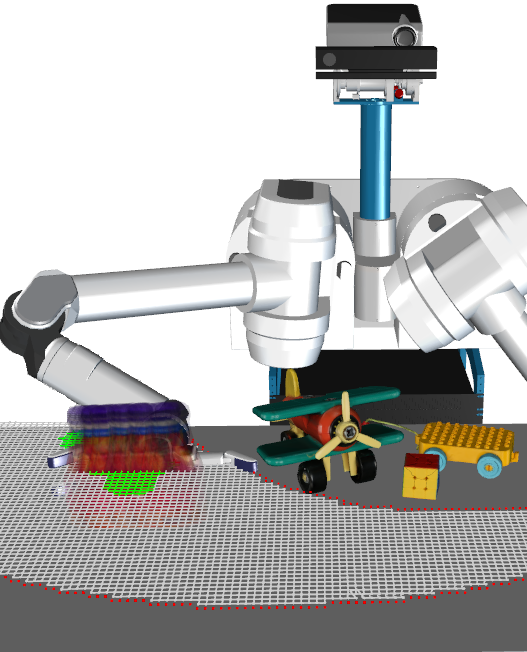}%
      \caption{Robot and Environment}%
      \label{fig:intro-all}%
    \end{subfigure}%
  \end{minipage}%
  \begin{minipage}{0.5\columnwidth}%
    \vspace{2ex}%
    \begin{subfigure}[t]{\textwidth}%
      \includegraphics[width=\textwidth]{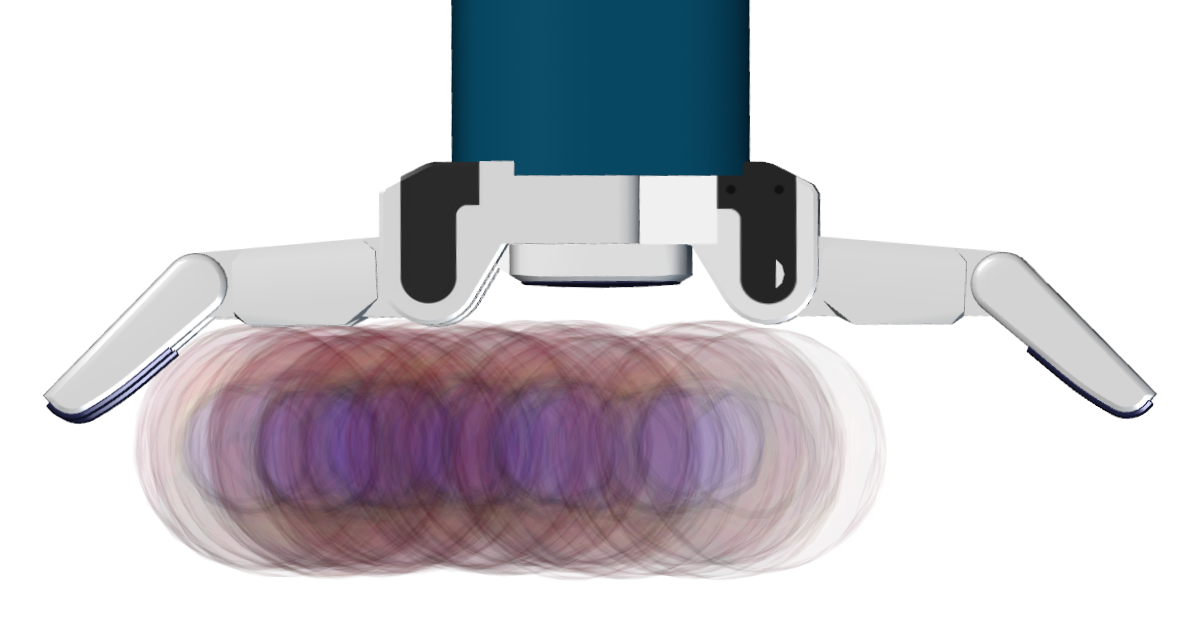}%
      \caption{Hand-Relative POMDP}%
        \label{fig:intro-relative}%
    \end{subfigure} \\[1.1ex]
    \begin{subfigure}[t]{\textwidth}%
      \includegraphics[width=\textwidth]{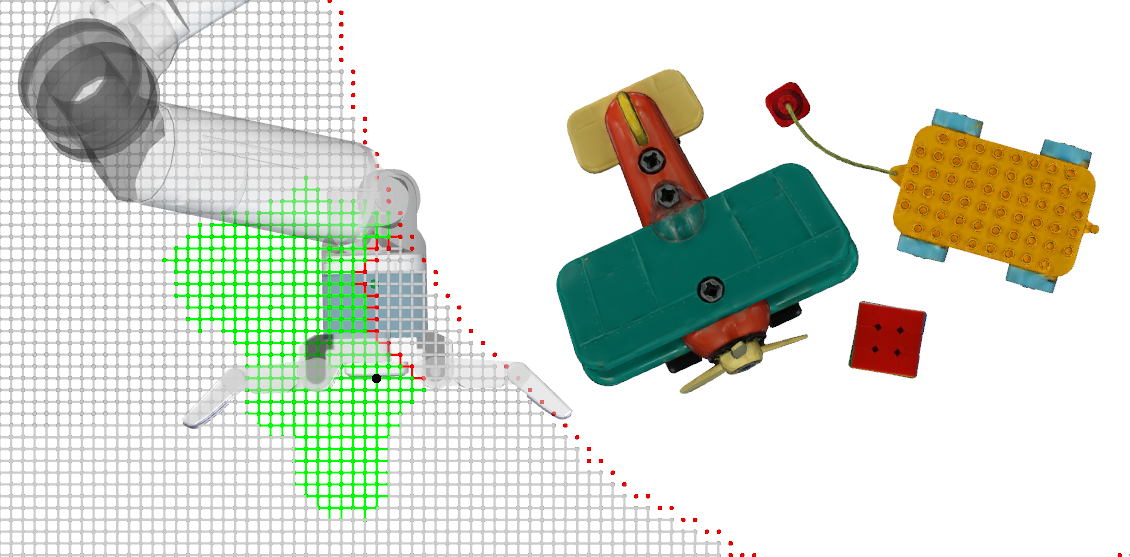}%
      \caption{Configuration Lattice}%
        \label{fig:intro-lattice}%
    \end{subfigure}%
  \end{minipage}%
  \caption[]{%
    A robot
    (\subref{fig:intro-all})~uses real-time feedback from contact
    sensors to manipulate an bottle on a cluttered table.
    (\subref{fig:intro-relative})~Our approach uses policies from the obstacle
    free, hand-relative contact manipulation problem to a search in the full
    state space.
    (\subref{fig:intro-lattice})~The configuration of the robot is represented
    as a point in a state-space lattice consisting of feasible
    (\feasibleswatch), infeasible (\infeasibleswatch), and unevaluated
    (\unevaluatedswatch) edges.  The probability distribution over the pose of
    the bottle is shown as a collection of semi-transparent renders. Best
    viewed in color.
  }%
  \label{fig:intro}%
\end{figure}

\vspace{1ex}

We leverage two key insights to overcome those challenges:
\begin{enumerate}
  \item We define a \emph{configuration lattice PODMP} (Lat-POMDP) that models
    both the configuration of the robot and the pose of the object. Lat-POMDP
    lazily constructs~\cite{pivtoraiko2005efficient,cohen2013single} a lattice
    in the robot's configuration space and uses the lattice to guarantee that
    the optimal policy does not take infeasible actions (\cref{sec:lattice}).

  \item We prove that the optimal value function of Rel-POMDP is an upper bound
    on the optimal value function of Lat-POMDP. We use this fact to create a
    heuristic that guides DESPOT~\cite{somani2013despot}, a state-of-the-art
    online POMDP solver, to quickly find a near-optimal policy for Lat-POMDP
    with no environment-specific offline pre-computation (\cref{sec:planner}).
\end{enumerate}

We validate the efficacy of the proposed algorithm on a simulation of
HERB~\cite{srinivasa2012herb}, a mobile manipulator equipped with a 7-DOF
Barrett WAM arm~\cite{salisbury1988preliminary} and the
BarrettHand~\cite{townsend2000barretthand} end-effector, manipulating an object
on a cluttered tabletop (\cref{fig:intro}, \cref{sec:simulation}). First, we
confirm that DESPOT successfully uses information-gathering actions to achieve
good performance in the absence of kinematic constraints. Then, we show that
the proposed algorithm outperforms five baselines in cluttered environments.

The proposed algorithm demonstrates that it is possible for an online POMDP
planner to simultaneously reason about object pose uncertainty and kinematic
constraints in contact manipulation. We are excited about the prospect of
exploiting the scalability of recent developments in online POMDP solvers, like
DESPOT, to solve more complex tasks in future work (\cref{sec:discussion}).

\section{Related Work}
Our work builds on a long history of research on enabling robotics reliably
manipulating objects under uncertainty. Early work focused planning open-loop
trajectories that can successfully reconfigure an object despite
non-deterministic uncertainty~\cite{lavalle1998objective} in its initial
pose~\cite{erdmann1988exploration,brokowski93curvedfence}.
More recently, the same type of worst-case, non-deterministic analysis has been
used to plan robust open-loop trajectories for
grasping~\cite{dogar2010pushgrasp,dogar2012physics} and rearrangement
planning~\cite{dogar2012planning,koval2015robust} under the quasistatic
assumption~\cite{mason1986mechanics,lynch1992manipulation}. Our approach also
makes the quasistatic assumption, but differs in two important ways: it (1)
considers probabilistic uncertainty~\cite{lavalle1998objective} and (2)
produces a closed-loop policy that uses real-time feedback from contact
sensors.

Another line of research aims to incorporate feedback from contact sensors into
manipulator control policies. This approach have been successfully used to
optimize the quality of a grasp~\cite{platt2010null} or servo towards a desired
contact sensor observation~\cite{zhang2000control,li2013control}. It is also
possible to directly learn a feedback policy that is robust to
uncertainty~\cite{pastor2011online,stulp2011learning}. These approaches have
achieved impressive real-time performance, but require a higher-level planner
to specify the goal.

One common approach is to plan a sequence of move-until-touch actions that
localize an object, then execute an open-loop trajectory to complete the
task~\cite{petrovskaya2011global,hebert2013next,javdani2013efficient}. Other
approaches, like our own, formulate the problem as a
POMDP~\cite{smallwood1973optimal,kaelbling1998planning} and find a policy that
only takes information-gathering actions when they are necessary to achieve the
goal~\cite{hsiao2007grasping,hsiao2008robust,hsiao2009relatively}.

Unfortunately, most of this work assumes that the end-effector can move freely
in the workspace and that objects do not significantly move when touched. More
recent approaches have relaxed the latter assumption by incorporating a
stochastic physics model into the
POMDP~\cite{horowitz2013interactive,koval2015precontact_ijrr} and using
SARSOP~\cite{kurniawati2008sarsop}, an offline
point-based~\cite{pineau2003point} POMDP solver, to find a near-optimal policy
for manipulating an object relative to the hand. Unfortunately, hand-relative
policies often fail when executed on a manipulator due to kinematic constraints
or collision with obstacles. We use a hand-relative policy to guide
DESPOT~\cite{somani2013despot}, an online POMDP solver~\cite{ross2008online},
in Lat-POMDP, a mixed-observable model~\cite{ong2009pomdps} that includes these
constraints.

Lat-POMDP represents the robot's configuration space as a state
lattice~\cite{pivtoraiko2005efficient}, a concept that we borrow from mobile
robot navigation~\cite{likhachev2009planning} and search-based planning for
manipulators~\cite{cohen2013single}. Similar to these approaches, and many
randomized motion planners~\cite{bohlin2000path,hauser2015lazy}, we use lazy
evaluation to defer collision checking until an action is queried by the
planner.

\section{Problem Formulation}
\label{sec:formulation}

\begin{figure}[t]%
  \centering%
  \begin{minipage}[b]{0.5\columnwidth}%
    \begin{subfigure}[t]{\textwidth}%
      \centering%
      \resizebox{\textwidth}{!}{%
        \subimport{figures/}{formulation_state.pdf_tex}%
      }%
      \caption{State, $a = (q, \xr)$}%
      \label{fig:formulation-state}%
    \end{subfigure}%
  \end{minipage}%
  \begin{minipage}[b]{0.5\columnwidth}%
    \begin{subfigure}[t]{\textwidth}%
      \includegraphics[width=\textwidth]{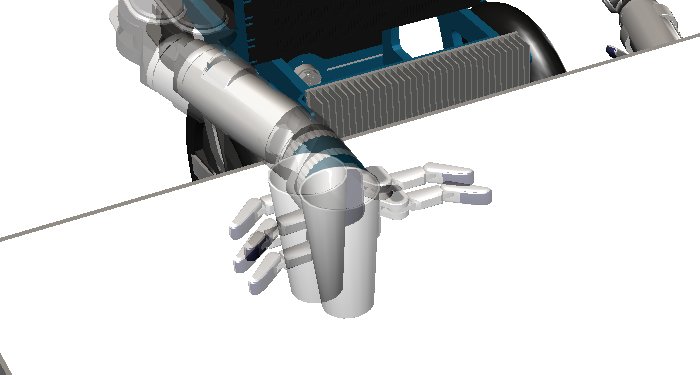}%
      \caption{Action, $a = (\xi, T)$}%
      \label{fig:formulation-action}%
    \end{subfigure} \\[1ex]
    \begin{subfigure}[t]{\textwidth}%
      \includegraphics[width=\textwidth]{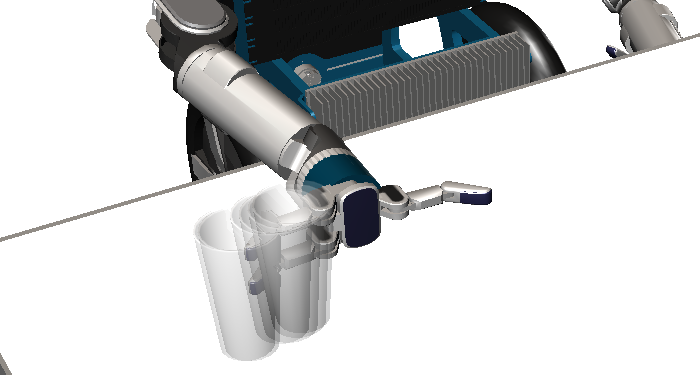}%
      \caption{Observation, $o$}%
      \label{fig:formulation-observation}%
    \end{subfigure}%
  \end{minipage}%
  \caption{
    We formulate planar contact manipulation as a POMDP with a state space
    (\subref{fig:formulation-state})~that contains the robot configuration $q$
    and the pose of the movable object $\xr$. The environment contains one
    movable object (white glass) and static obstacles.
    (\subref{fig:formulation-action})~action is a short joint-space trajectory
    $\xi$ of duration $T$. After executing an action, the robot receives
    feedback from (\subref{fig:formulation-observation})~binary contact sensors
    on its end effector.
  }%
  \label{fig:formulation}%
\end{figure}

We formulate planar contact manipulation as a partially observable Markov
decision process (POMDP)~\cite{smallwood1973optimal}. A POMDP is a tuple $(S,
A, O, T, \Omega, R)$ where $S$ is the set of states, $A$ is the set of actions,
$O$ is the set of observations, $T(s, a, s') = p(s' | s, a)$ is the transition
model, $\Omega(s, a, o) = p(o | s, a)$ is the observation model, and $R(s, a):
S \times A \to \mathbb{R}$ is the reward function.

The robot does not know the true state $s_t$. Instead, the robot tracks the
\emph{belief state} $\bel(s_t) = p(s_t | a_{1:t}, o_{1:t})$, which is a
probability distribution over the current state $s_t$ conditioned on the
history of previous actions $a_{1:t} = \{ a_1, \dotsc, a_t \}$ and observations
$o_{1:t} = \{ o_1, \dotsc, o_t \}$. The set of all belief states is known as
\emph{belief space} $\Delta$.

Our goal is to find a policy $\pi: \Delta \to A$ over belief space that
optimizes the \emph{value function}
\begin{align*}
  V^\pi \left[ \bel \right]
    &= E \left[ \sum_{t = 0}^\infty \gamma^t R(s_t, a_t) \right]
\end{align*}
where the expectation $E[\cdot]$ is taken over the sequence of states visited
by $\pi$. We use $V^*$ to denote the \emph{optimal value function}, the value
function of an optimal policy. The \emph{discount factor} $\gamma \in [0, 1)$
adjusts the relative value of present versus future reward.

In our problem, \emph{state} $s = (q, \xo) \in S$ is the configuration of the
robot $q \in Q$ and the pose of the movable object $\xo \in \Xo = \SE{3}$
(\cref{fig:formulation-state}). An \emph{action} $a = (\xi, T) \in A$ is a
trajectory $\xi: [ 0, T ] \to Q$ that starts in configuration $\xi(0)$ and ends
configuration $\xi(T)$ at time $T$ (\cref{fig:formulation-action}). We assume
that uncertainty over the pose of the object dominates controller and
proprioceptive error. Therefore, we treat $q$ as a fully-observable state
variable and neglect the dynamics of the manipulator; i.e. model it as being
position controlled.

The robot executes a quasistatic, position controlled push if it comes in
contact with the movable object. The \emph{quasistatic assumption} states that
friction is high enough to neglect the acceleration of the object; i.e. the
object stops moving as soon as it leaves contact~\cite{mason1986mechanics}.
This approximation is accurate for many tabletop  manipulation
tasks~\cite{dogar2010pushgrasp,dogar2012planning}.

We define the stochastic \emph{transition model} $T(s, a, s')$ in terms of a
deterministic quasistatic physics model~\cite{lynch1992manipulation} by
introducing noise into the parameters~\cite{duff2010motion}. We do not attempt
to refine our estimate of these parameters during execution. After executing an
action, the robot receives an \emph{observation} $o \in \{ 0, 1 \}^\no = O$
from its $\no$ binary contact sensors (\cref{fig:formulation-observation}). We
assume that a stochastic \emph{observation model} $\Omega(s', a, o)$ is
available, but make no assumptions about its form.

The robot's goal is to push the movable object into a hand-relative \emph{goal
region} $\Xgoal \subseteq X$. We encode this in the \emph{reward function}
\begin{align*}
  R(s, a) &= \begin{cases}
    \hphantom{-}0 &: [\Tabs{ee}(q)]^{-1} \xo \in \Xgoal \\
    -1            &: \text{otherwise}
  \end{cases}
\end{align*}
where $\Tabs{\text{ee}}: Q \to \SE{3}$ is the forward kinematics of the end
effector. The reward function penalizes states where the movable object is
outside of $\Xgoal$. Note that the choice of $-1$ reward is arbitrary: any
negative reward would suffice.

\section{Configuration Lattice POMDP}
\label{sec:lattice}
Solving the POMDP formulated in \cref{sec:formulation} is challenging because
the state space is continuous, the action space is infinite dimensional, the
transition model is expensive to evaluate, and the optimal policy may perform 
long-horizon information gathering.

In this section, we simplify the problem by constraining the end effector to a
fixed transformation relative to the support surface (\cref{sec:lattice-state})
and build a lattice in configuration space (\cref{sec:lattice-lattice}).
Configurations in the lattice are connected by action templates that start and
end on lattice points (\cref{sec:lattice-action}). Finally, we define a
configuration lattice POMDP that penalizes infeasible actions to insure that
the optimal policy will never take an infeasible action
(\cref{sec:lattice-feasibility}).

\subsection{Planar Constraint}
\label{sec:lattice-state}
We assume that the robot's end effector is constrained to have a fixed
transformation $\Trelative{\text{sup}}{\text{ee}} \in \SE{3}$ relative to the
support surface $\Tsupport \in \SE{3}$. A configuration $q$ satisfies this
constraint iff
\begin{align}
  \Tabs{\text{ee}}(q) &= \Tsupport \Trelative{\text{sup}}{\text{ee}}
    \Rot(\thetar, \hat{e}_z) \Trans(\xr, \yr, 0)
  \label{eqn:fk}
\end{align}
where $(\xr, \yr, \thetar) \in \Xr = SE(2)$ is the pose of the end effector in
the plane. $\Rot(\theta, \hat{v})$ denotes a rotation about axis $\hat{v}$ by
angle $\theta$ and $\Trans(v)$ denotes a translation by vector $v$.

We also assume that the movable object also has a fixed transformation
$\Trelative{\text{sup}}{\text{o}} \in SE(3)$ relative to the support surface;
i.e. that it does not tip or topple. We parameterize its pose as
\begin{align*}
  x_\text{o} = \Tsupport \Trelative{\text{sup}}{\text{o}}
    \Trans(\xo, \yo, 0) \Rot(\thetao, \hat{e}_z)
\end{align*}
where $(\xo, \yo, \thetao) \in \Xo = \SE{2}$ is its pose in the plane.

\subsection{Configuration Lattice}
\label{sec:lattice-lattice}
We discretize the space of the end effector poses $\Xr$ by constructing a
\emph{state lattice} $\Xrlattice \subseteq \Xr$ with a translational resolution
of $\Delta \xr, \Delta \yr \in \mathbb{R}^+$ and an angular resolution of
$\Delta \thetar = 2 \pi / n_\theta$ for some integer value of $n_\theta \in
\mathbb{N}$~\cite{pivtoraiko2005efficient}. The lattice consists of the
discrete set of points
\begin{align*}
  \Xrlattice &= \{ (i_x \Delta \xr, i_y \Delta \yr, i_\theta \Delta \thetar)
    : i_x, i_y, i_\theta \in \mathbb{Z} \}.
\end{align*}

Each \emph{lattice point} $\xr \in \Xrlattice$ may be reachable from multiple
configurations. We assume that we have access to an \emph{inverse kinematics
function} $\qlattice(\xr)$ that returns a single solution $\{ q \}$  that
satisfies $\Tabs{\text{ee}}(q) = \xr$ or $\emptyset$ if no such solution
exists. A solution may not exist if $\xr$ is not reachable, the end effector is
in collision, or the robot collides with the environment in all possible
inverse kinematic solutions.

Instead of planning in $S$, we restrict ourselves to the state space $\Slattice
= \Qlattice \times \Xo$ where $\Qlattice = \bigcup_{\xr \in \Xrlattice}
\qlattice(\xr)$ is the discrete set of configurations returned by
$\qlattice(\cdot)$ on all lattice points. Note that, despite the structure of
the Cartesian lattice, planning occurs in configuration space $Q$, not the task
space $\Xr$.

\subsection{Action Templates}
\label{sec:lattice-action}
Most actions do not transition between states in the lattice $\Slattice$.
Therefore, we restrict ourselves to action that are instantiated from one of a
finite set $\Alattice$ of action templates. An action template $\alattice =
(\xi_\text{lat}, T) \in \Alattice$ is a Cartesian trajectory $\xi_\text{lat}:
[0, T] \to SE(3)$ that specifies the relative motion of the end effector. The
template starts at the origin $\xi_\text{lat}(0) = I$ and ends at some lattice
point $\xi_\text{lat}(T) \in \Xrlattice$. It is acceptable for multiple actions
templates in $\Alattice$ to end at the same lattice point or have different
durations.

An action $a = (\xi, T) \in A$ satisfies template $\alattice$ at lattice point
$\xr \in \Xrlattice$ if it satisfies three conditions:
\begin{enumerate}
\item starts in configuration $\xi(0) = \qlattice(\xr)$
\item ends in configuration $\xi(T) = \qlattice( \xr \xi_\text{lat}(T) )$
\item satisfies $\Tabs{ee}(\xi(\tau)) = \xr \xi_\text{lat}(\tau)$ for all $0
  \le \tau \le T$.
\end{enumerate}
These conditions are satisfied if $\xi$ moves between two configurations in
$\Qlattice$ and produces the same end effector motion as the Cartesian
trajectory $\xi_\text{lat}$.

We define the function $\Project(\xr, a) \mapsto \alattice$ to map an action
$a$ to the template $\alattice$ it instantiates. The pre-image
$\Project^{-1}(\xr, \alattice)$ contains the set of all possible instantiations
of action template $\alattice$ at $\xr$. We assume that we have access to a
\emph{local planner} $\phi(q, \alattice)$ that returns a singleton action $\{ a
\} \subseteq \Project^{-1}(q, \alattice)$ from this set or $\emptyset$ to
indicate failure. The local planner may fail due to kinematic constraints, end
effector collision, or robot collisions.

\subsection{Lattice POMDP}
We use the lattice to define a \emph{configuration lattice POMDP}, Lat-POMDP,
$(\Slattice, \Alattice, O, \Tlattice, \Omegalattice, \Rlattice)$ with state
space $\Slattice = \Xrlattice \times \Xo$. The structure of the lattice
guarantees that that all instantiations $\Project^{-1}(q, \alattice)$ of the
action template $\alattice$ execute the same Cartesian motion $\xi_\text{lat}$
of the end effector. This motion is independent of the starting pose of the end
effector $\xr$ and configuration $\qlattice(\xr)$ of the robot.

If the movable object only contacts the end effector---not other parts of the
robot or the environment---then the motion of the object is also independent of
these variables. We refer to a violation of this assumption as
\emph{un-modelled contact}. The lattice transition model $\Tlattice(\slattice,
\alattice, s'_\text{lat})$ is identical to $T(s, a, s')$ when $\alattice$ is
feasible and no un-modelled contact occurs. If either condition is violated,
the robot deterministically transitions to an absorbing state $\sinvalid$.
Similarly, the lattice observation model $\Omegalattice(\slattice, \alattice,
o)$ is identical to $\Omega(s, a, o)$ for valid states and is uniform over $O$
for $\slattice = \sinvalid$.

We penalize invalid states, infeasible actions, and un-modelled contact in the
reward function
\begin{align*}
  \Rlattice(\slattice, \alattice) &= \begin{cases}
    -1 &: \slattice = \sinvalid \\
    -1 &: \phi(\qlattice(\xr), \alattice) = \emptyset \\
    -1 &: \xo \not\in X_\text{g} \\
    \hphantom{-}0 &: \text{otherwise}
  \end{cases}
\end{align*}
by assigning $\min_{s \in S, a \in A} R(s, a) = -1$ reward to them.

\subsection{Optimal Policy Feasibility}
\label{sec:lattice-feasibility}
Our formulation of Lat-POMDP guarantees that an optimal policy $\pilattice^*$
will never take an infeasible action:

\begin{theorem}
  An optimal policy $\pilattice^*$ of Lat-POMDP will not execute an infeasible
  action in belief $b$ if $\Vlattice^{*}[b] > \frac{-1}{1 - \gamma}$.
  \label{thm:infeasible}
\end{theorem}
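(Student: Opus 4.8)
The plan is to argue by contradiction via a value comparison. Suppose $\pilattice^*$ takes an infeasible action $\alattice$ in some belief $b$ with $\Vlattice^*[b] > \frac{-1}{1-\gamma}$. An infeasible action means $\phi(\qlattice(\xr), \alattice) = \emptyset$ at (some of) the configurations in the support of $b$, which by the definition of $\Rlattice$ incurs reward $-1$ at those states. I would first establish the trivial but crucial lower bound that any policy satisfies $\Vlattice^\pi[b] \ge \frac{-1}{1-\gamma}$, since every per-step reward is at least $-1$ and the geometric series $\sum_{t=0}^\infty \gamma^t (-1) = \frac{-1}{1-\gamma}$; hence $\Vlattice^*[b] \ge \frac{-1}{1-\gamma}$ always, and the hypothesis $\Vlattice^*[b] > \frac{-1}{1-\gamma}$ is the interesting regime.

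Next I would use the Bellman optimality equation for Lat-POMDP, writing $\Vlattice^*[b] = \max_{\alattice} \big( R_b(\alattice) + \gamma\, E[\Vlattice^*[b']] \big)$ where $R_b(\alattice)$ is the expected immediate reward under $b$ and the expectation over $b'$ is over successor beliefs. For the infeasible action chosen by $\pilattice^*$, I claim the bracketed quantity is exactly $\frac{-1}{1-\gamma}$ or less. The key structural fact is that from the absorbing state $\sinvalid$ the robot stays in $\sinvalid$ forever and collects $-1$ each step, so $\Vlattice^*[\delta_{\sinvalid}] = \frac{-1}{1-\gamma}$; and once an infeasible action is taken at a state, either that state is already $\sinvalid$ (reward $-1$, stay) or it transitions deterministically to $\sinvalid$ (reward $-1$ now, then trapped). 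Either way the return from any state at which $\alattice$ is infeasible is exactly $\frac{-1}{1-\gamma}$. If $\alattice$ is infeasible on the entire support of $b$, the value of taking it is exactly $\frac{-1}{1-\gamma}$; if it is infeasible only on part of the support, the states where it is feasible still contribute at most $\frac{-1}{1-\gamma}$ each (by the universal lower bound being also... no — I need the per-state optimal value is at most $0$; in fact $V^*_\text{lat}$ at any belief is $\le 0$ since all rewards are $\le 0$), so the convex combination is $\le \frac{-1}{1-\gamma}$, with equality only if feasible-support has measure zero. In all cases the value of the infeasible action is $\le \frac{-1}{1-\gamma}$, and since $\pilattice^*$ is optimal this forces $\Vlattice^*[b] \le \frac{-1}{1-\gamma}$, contradicting the hypothesis.

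The main obstacle is being careful about the case where $\alattice$ is infeasible at only some states in the support of $b$: I must avoid sloppily claiming the whole belief transitions to $\sinvalid$. The clean fix is the sandwich $\frac{-1}{1-\gamma} \le \Vlattice^*[b''] \le 0$ for every belief $b''$ (lower bound from the geometric series of $-1$'s, upper bound since every reward is nonpositive), combined with the exact evaluation $\Vlattice^*[\cdot] = \frac{-1}{1-\gamma}$ on any state at which the action is infeasible — so the immediate-plus-future value of the infeasible action, being a convex combination of terms each $\le \frac{-1}{1-\gamma}$, cannot exceed $\frac{-1}{1-\gamma}$. A secondary subtlety is making the Bellman recursion rigorous for the mixed-observable, infinite-horizon discounted setting; I would simply cite the standard POMDP optimality equation rather than re-derive it, since $\gamma \in [0,1)$ and rewards are bounded. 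I would close by noting the contrapositive is exactly the theorem statement.
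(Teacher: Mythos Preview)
Your overall strategy---contradiction via the Bellman equation, showing that an infeasible action yields value exactly $\frac{-1}{1-\gamma}$---is the same as the paper's. The paper's proof is a three-line version of your full-support case: take the infeasible action, get $-1$, land in $\sinvalid$, collect $-1$ forever, contradict $\Vlattice^*[b] > \frac{-1}{1-\gamma}$.

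The gap is in your handling of the ``partially infeasible'' case, which is both unnecessary and incorrect. It is unnecessary because feasibility in Lat-POMDP depends only on the robot configuration: the test $\phi(\qlattice(\xr),\alattice)=\emptyset$ involves only $\xr$, and the paper explicitly treats $q$ (hence $\xr$) as fully observable. Every state in the support of $b$ therefore shares the same $\xr$, so an action is feasible or infeasible on the \emph{entire} support; there is no mixed case to worry about. Your argument for that case is also wrong as written: you claim the convex combination of per-state contributions is $\le \frac{-1}{1-\gamma}$, but a feasible-state contribution is $R + \gamma \Vlattice^*[\cdot]$ with $R\in\{-1,0\}$ and $\Vlattice^*\in[\frac{-1}{1-\gamma},0]$, hence lies in $[\frac{-1}{1-\gamma},0]$, not below $\frac{-1}{1-\gamma}$. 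Mixing such terms with the infeasible-state terms (which equal $\frac{-1}{1-\gamma}$) gives something $\ge \frac{-1}{1-\gamma}$, so the inequality you need would fail if this case actually occurred. Once you invoke mixed observability and drop the partial case, your proof collapses to the paper's.
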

\begin{proof}
  Suppose $\Vlattice^{*}[b] > \frac{-1}{1 - \gamma}$ and an optimal policy
  $\pilattice$ executes the invalid action in belief state $b$. The robot
  receives a reward of $-1$ and transitions to $\sinvalid$. For all time after
  that, regardless of the actions that $\pilattice$ takes, the robot receives a
  reward of $\Rlattice(\sinvalid, \cdot) = -1$ at each timestep. This yields a
  total reward of $\Vlattice^{\pilattice} = \frac{-1}{1 - \gamma}$, which is
  the minimum reward possible to achieve.

  The value function of the optimal policy satisfies the Bellman equation
  $\Vlattice^*[b] = \argmax_{\alattice \in \Alattice} Q^{*}[b, \alattice]$,
  where $Q^*[b, a]$ denotes the value of taking action $a$ in belief state $b$,
  then following the optimal policy for all time. This contradicts the fact
  that $\Vlattice^*[b] > \frac{-1}{1 - \gamma}$ and $\Vlattice^{\pilattice}[b]
  = \frac{-1}{1 - \gamma}$. Therefore, $\pilattice$ must not be the optimal
  policy.
\end{proof}

We can strengthen our claim if (1) we guarantee that every lattice point
reachable from the $q_0$ has at least one feasible action and (2) it is
possible to achieve the goal with non-zero probability. Under those
assumptions we know that $\Vlattice^{*}[b] > \frac{-1}{1 - \gamma}$ and
\cref{thm:infeasible} guarantees that $\pilattice^*$ will never take an
infeasible action. One simple way to satisfy condition (1) is to require that
all actions are reversible; i.e. make the lattice an undirected graph.

\section{Online POMDP Planner}
\label{sec:planner}
Lat-POMDP has a reward function that changes whenever obstacles are added to,
removed from, or moved within the environment. An offline POMDP solver, like
point-based value iteration~\cite{pineau2003point} or
SARSOP~\cite{kurniawati2008sarsop} would require re-computing the optimal
policy for each problem instance.

Instead, we use DESPOT, a state-of-the-art online POMDP solver that uses
regularization to balance the size of the policy against its
quality~\cite{somani2013despot}. DESPOT incrementally explores the
action-observation tree rooted at $\bel(s_0)$ by performing a series of trials.
Each \emph{trial} starts at the root node, descends the tree, and terminates by
adding a new leaf node to the tree.

In each step, DESPOT chooses the action that maximizes the upper bound
$\bar{V}[b]$ and the observation that maximizes \emph{weighted excess
uncertainty}, a regularized version of the gap $\bar{V}[b] - \underbar{V}[b]$
between the upper and lower bounds. This search strategy heuristically focuses
exploration on the optimally reachable belief
space~\cite{kurniawati2008sarsop}.  Finally, DESPOT backs up the upper and
lower bounds of all nodes visited by the trial.

\vspace{0.25em}

\noindent There are key two challenges in solving Lat-POMDP:

First, we must construct the configuration lattice. We use lazy evaluation to
interleave lattice construction with planning.  By doing so, we only evaluate
the parts of the lattice that are visited by DESPOT
(\cref{sec:planner-lattice}).

Second, we must provide DESPOT with upper and lower bounds on the optimal value
function to guide its search.  We derive these bounds from a relaxed version of
the problem that ignores obstacles by only considering the pose of the movable
object relative to the hand
(\cref{sec:planner-cartesian,sec:planner-upperbound,sec:planner-lowerbound}).

\subsection{Lattice Construction}
\label{sec:planner-lattice}
DESPOT uses upper and lower bounds to focus its search on belief states that
are likely to be visited by the optimal policy. We exploit this fact to avoid
constructing the entire lattice. Instead, we interleave lattice construction
with planning and only instantiate the lattice edges visited by the search,
similar to the concept \emph{lazy evaluation} used in motion
planning~\cite{bohlin2000path,hauser2015lazy}.

We begin with no pre-computation and run DESPOT until it queries the transition
model $\Tlattice$, observation model $\Omegalattice$, or reward function
$\Rlattice$ for a state-action pair $(\xr, \alattice)$ that has not yet been
evaluated. When this occurs, we pause the search and check the feasibility of
the action by running the local planner $\phi(\xr, \alattice)$. We use the
outcome of the local planner to update the Lat-POMDP model and resume the
search.

\Cref{fig:planner-lazy} shows the (\subref{fig:planner-lazy-full})~full lattice
and (\subref{fig:planner-lazy-partial})~subset evaluated by DESPOT. Feasible
action templates are shown as green, infeasible action templates are shown as
red, and unevaluated action templates are shown as gray. Note that DESPOT
evaluates only a small number of state-action pairs, obviating the need for
performing computationally expensive collision checks on much of the lattice.

Note that it is also possible to use a hybrid approach by evaluating some parts
of the lattice offline and deferring others to be computed online. For example,
we may compute inverse kinematics solutions, kinematic feasibility checks, and
self-collision checks in an offline pre-computation step.  These values are
fixed for a given support surface and, thus, can be used across multiple
problem instances. Other feasibility tests, e.g. collision with obstacles in
the environment, are lazily evaluated online.

\begin{figure}[t]%
  \centering%
  \begin{subfigure}[t]{0.49\columnwidth}%
    \includegraphics[width=\textwidth]{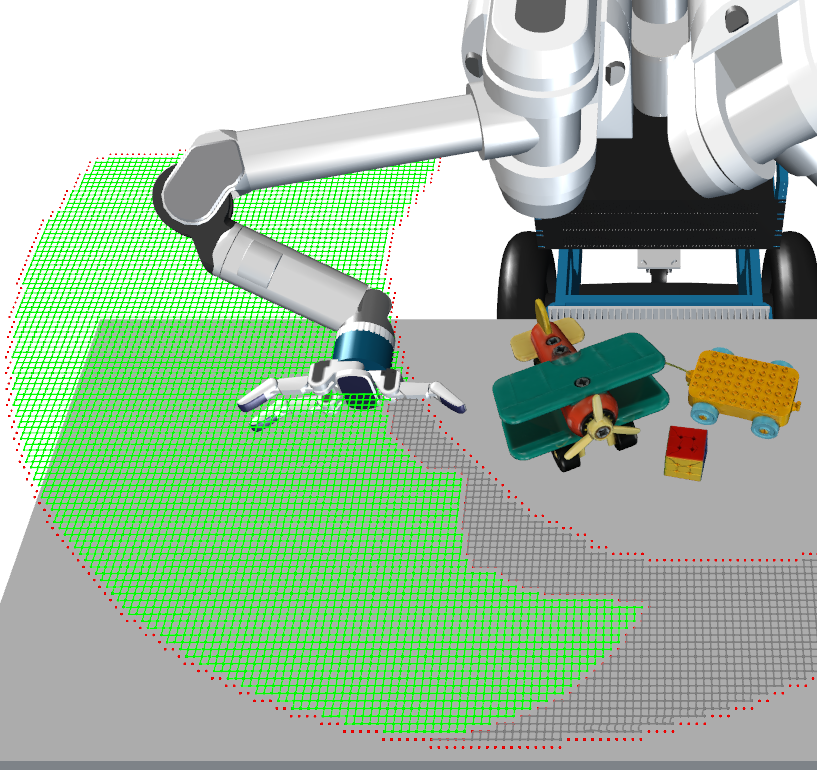}%
    \caption{Full Lattice}%
    \label{fig:planner-lazy-full}%
  \end{subfigure}
  \begin{subfigure}[t]{0.49\columnwidth}%
    \includegraphics[width=\textwidth]{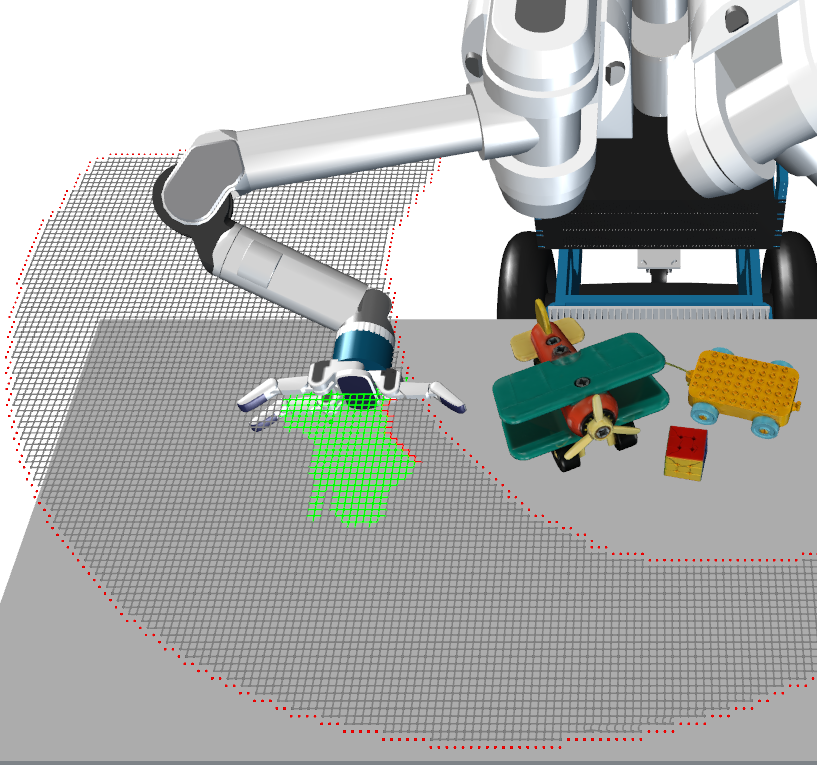}%
    \caption{Visited Lattice}%
    \label{fig:planner-lazy-partial}%
  \end{subfigure}%
  \caption[]{%
    Comparison of the (\subref{fig:planner-lazy-full})~full lattice to the
    (\subref{fig:planner-lazy-partial})~subset visited by DESPOT, with feasible
    (\feasibleswatch), infeasible (\infeasibleswatch) and unevaluated
    (\unevaluatedswatch) edges. We interleave planning with lattice
    construction to construct only the subset of the lattice visited by the
    search. This figure is best viewed in color.
  }%
  \label{fig:planner-lazy}%
\end{figure}

\subsection{Hand-Relative POMDP}
\label{sec:planner-cartesian}
Recall from \cref{sec:lattice} that the motion of the end effector and the
object is independent of the pose of the end effector $\xr$ or the robot
configuration $q$. We use this insight to define a \emph{hand-relative POMDP}
Rel-POMDP $(\Srel, \Alattice, O, \Trel, \Omegarel, \Rrel)$ with a state space
that only includes the pose $\xorel = x_\text{r}^{-1} \xo \in \Srel$ of the
movable object relative to the hand.  The hand-relative transition model
$\Trel$, observation model $\Omegarel$, and reward function $\Rrel$ are
identical to the original model when no un-modelled contact occurs. Rel-POMDP
is identical to the hand-relative POMDP models used in prior
work~\cite{horowitz2013interactive,koval2015manifold_ijrr,koval2015precontact_ijrr}
and is equivalent to assuming that environment is empty and the robot is a lone
end effector actuated by an incorporeal planar joint.

\subsection{Lat-POMDP Upper Bound}
\label{sec:planner-upperbound}
Rel-POMDP is a relaxation of Lat-POMDP that treats all actions as feasible. As
such:

\begin{theorem}
  The optimal value function $\Vrel^*$ of Rel-POMDP is an upper bound on the
  optimal value function $\Vlattice^*$ of Lat-POMDP:
  $\Vrel^*[b] \ge \Vlattice^*[b]$ for all $b \in \Delta$.
  \label{thm:planner-upperbound}
\end{theorem}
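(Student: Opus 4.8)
The plan is to exhibit, for every policy $\pilattice$ on Lat-POMDP, a policy $\pirel$ on Rel-POMDP that does at least as well from the corresponding belief, so that taking the supremum over policies on each side gives $\Vrel^*[b] \ge \Vlattice^*[b]$. The key structural fact I would lean on is the one established in \cref{sec:planner-cartesian}: whenever no un-modelled contact occurs and the action is feasible, the transition model $\Tlattice$, observation model $\Omegalattice$, and reward $\Rlattice$ project exactly onto $\Trel$, $\Omegarel$, $\Rrel$ under the map $(\xr, \xo) \mapsto \xorel = \xr^{-1}\xo$. So Rel-POMDP is literally the image of Lat-POMDP under this projection, except that (i) Rel-POMDP never has the absorbing state $\sinvalid$, and (ii) in Rel-POMDP every action template is always available, whereas in Lat-POMDP $\phi(\qlattice(\xr),\alattice)$ may return $\emptyset$.

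First I would set up the simulation/coupling argument: given any Lat-POMDP policy $\pilattice$, define $\pirel$ to mimic $\pilattice$'s action choices along any history; because the observation distributions agree on the non-$\sinvalid$ part of the state space, the two processes can be coupled so that they follow identical action–observation histories up until the (possibly never-occurring) first timestep $\tau$ at which Lat-POMDP either takes an infeasible action or suffers un-modelled contact and jumps to $\sinvalid$. Before $\tau$, the per-step rewards are identical under the coupling (both equal $0$ or $-1$ according to whether the object is in the goal region, which is a function of $\xorel$ alone). At and after $\tau$, Lat-POMDP is pinned to $\sinvalid$ and collects $-1$ forever, i.e.\ the minimum possible reward $\frac{-1}{1-\gamma}$ discounted back to $\tau$; Rel-POMDP, whatever it does from $\tau$ on, collects a discounted reward no smaller than this, since $-1$ is the per-step minimum. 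Summing the discounted rewards and taking expectations over the coupled histories gives $\Vrel^{\pirel}[b] \ge \Vlattice^{\pilattice}[b]$. Finally, $\Vrel^*[b] = \sup_{\pirel}\Vrel^{\pirel}[b] \ge \Vrel^{\pirel}[b] \ge \Vlattice^{\pilattice}[b]$, and taking the supremum over $\pilattice$ yields the claim. (Alternatively, one can phrase this as an induction on the value-iteration/Bellman backup: show $V^{(k)}_{\text{rel}} \ge V^{(k)}_{\text{lat}}$ for the $k$-step optimal value functions under the projection, using that the Rel-POMDP backup maximizes over a superset of feasible actions with pointwise-larger continuation values, then pass to the limit $k\to\infty$.)

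The main obstacle is making the correspondence between belief states on the two models precise enough that the argument is not just hand-waving. A belief $b$ in Lat-POMDP is a distribution over $\Slattice = \Qlattice \times \Xo$ (plus possibly mass on $\sinvalid$), while a Rel-POMDP belief is a distribution over $\Srel$; I need the pushforward under $(\xr,\xo)\mapsto \xr^{-1}\xo$ to commute with the belief-update (Bayes filter) of each model, which it does precisely because the dynamics and observations are $\xr$-independent — but this should be stated as a lemma and used explicitly, and one must be slightly careful that $\xr$ itself is fully observable in Lat-POMDP, so the Lat-POMDP belief really factors as a known $\xr$ times a distribution over $\xo$, whose pushforward is well-defined. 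The secondary subtlety is the treatment of $\sinvalid$ and of histories that "branch off" at $\tau$: since Rel-POMDP has no way to even represent reaching $\sinvalid$, the cleanest route is the value-domination bound ($-1$ is the floor) rather than an exact coupling past $\tau$, which sidesteps having to define $\pirel$'s behavior after an event that cannot happen in its own model. Everything else — geometric series for the tail, monotonicity of $\sup$ — is routine.
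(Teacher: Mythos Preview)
Your proposal is correct and follows essentially the same route as the paper: mimic the optimal Lat-POMDP policy on Rel-POMDP, couple the two processes so that they agree up to the first infeasible action or un-modelled contact, bound the Lat-POMDP tail after that point by the geometric series $\frac{-\gamma^\tau}{1-\gamma}$ using that $-1$ is the per-step minimum, and then take the supremum over policies. The paper phrases the coupling via the ``scenario'' device of \cite{ng2000pegasus,somani2013despot} (fix $s_0$ and a sequence of uniform random numbers so that execution becomes deterministic), which is exactly your coupling argument in different clothing; your treatment is in fact slightly more careful than the paper's in spelling out the belief-space correspondence via the pushforward $(\xr,\xo)\mapsto \xr^{-1}\xo$ and in noting that the mixed observability of $\xr$ makes this pushforward well-defined.
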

\begin{proof}
  We define a \emph{scenario} $\psi = (s_0, \psi_1, \psi_2, \dotsc)$ as an
  abstract simulation trajectory that captures all uncertainty in our POMDP
  model~\cite{ng2000pegasus,somani2013despot}. A scenario is generated by
  drawing the initial state $s_o \sim \bel(s_0)$ from the initial belief state
  and each random number $\psi_1 \sim \uniform[0, 1]$ uniformly from the unit
  interval. Given a scenario $\psi$, we assume that the outcome of executing a
  sequence of actions is deterministic; i.e. all stochasticity is captured in
  the initial state $s_0$ and the sequence of random numbers $\psi_1, \psi_2,
  \dotsc$.

  Suppose we have a policy $\pi$ for Rel-POMDP that executes the sequence of
  actions $a_\text{lat,1}, a_\text{lat,2}, \dotsc$ in scenario $\psi$. The
  policy visits the sequence of states $s_{\text{rel},1}, s_{\text{rel},2},
  \dotsc$ and receives the sequence of rewards $R_1, R_2, \dotsc$.

  Now consider executing $\pi$ in the same scenario $\psi$ on Lat-POMDP.
  Without loss of generality, assume that $\pi$ first takes an infeasible
  action or makes un-modelled contact with the environment at timestep $H$. The
  policy receives the same sequence of rewards $R_1, R_2, \dotsc, R_2, \dotsc,
  R_{H-1}, -1, -1, \dotsc$ as it did on Rel-POMDP until timestep $H$. Then, it
  receives $-1$ reward for taking an infeasible action, transitions to the
  absorbing state $\sinvalid$, and receives $-1$ reward for all time.

  Policy $\pi$ achieves value $V^\pi_{\text{rel},\psi} = \sum_{t=0}^\infty
  \gamma^t R_t$ on Rel-POMDP and $V^\pi_{\text{lat},\psi} = \sum_{t=0}^{H-1}
  \gamma^t R_t - \frac{\gamma^H}{1 - \gamma}$ on Lat-POMDP in scenario $\psi$.
  Since $R_t \ge -1$, we know that $V^\pi_{\text{rel},\psi} \ge
  V^\pi_{\text{lat},\psi}$. The value of a policy $V^\pi =
  E_{\psi}[V_{\psi}^\pi]$ is the expected value of $\pi$ over all scenarios.

  Consider the optimal policy $\pi_\text{lat}^*$ of Lat-POMDP. There exists
  some Rel-POMDP policy $\pi_\text{mimic}$ that executes the same sequence of
  actions as $\pi_\text{lat}^*$ in all scenarios. From the reasoning above, we
  know that $V^{\pi_\text{mimic}}_\text{rel} \ge V^*_\text{lat}$. We also know
  that  $V_\text{rel}^* \ge V^{\pi_\text{mimic}}_\text{rel}$ because the value
  of any policy is a lower bound on the optimal value function.

  Therefore, $V_\text{rel}^* \ge V^{\pi_\text{mimic}}_\text{rel} \ge
  V^*_\text{lat}$; i.e. the optimal value function $V_\text{rel}^*$ of
  Rel-POMDP is an upper bound on the optimal value function $V_\text{lat}^*$ of
  Lat-POMDP.
\end{proof}

This result implies that \emph{any} upper bound $\bar{V}_\text{rel}$ is an
upper bound on the value of the optimal value function $\bar{V}_\text{rel} \ge
\Vrel^* \ge \Vlattice^*$. Therefore, we may also use $\bar{V}_\text{rel}$ as an
upper bound on Lat-POMDP. The key advantage of doing so is that the
$\bar{V}_\text{rel}$ may be pre-computed once per hand-object pair. In
contrast, the same upper bound on $\bar{V}_\text{lat}$ must be re-computed for
each problem instance.

\subsection{Lower Bound}
\label{sec:planner-lowerbound}
We also use Rel-POMDP as a convenient method of constructing a lower bound
$\underbar{V}_\text{lat}$ for Lat-POMDP. As explained above, the value of any
policy is a lower bound on the optimal value function. We use offline
pre-computation to compute a rollout policy on $\pi_\text{rollout}$ for
Rel-POMDP once per hand-object pair. For example, we could use MDP value
iteration to compute a QMDP policy~\cite{littman1995learning} or a point-based
method~\cite{pineau2003point,kurniawati2008sarsop} to find a near-optimal
policy.

Given an arbitrary policy $\pi_\text{rollout}$ computed in this way, we
construct an approximate lower bound $\underbar{V}_\text{lat}$ for Lat-POMDP by
estimating the value $V_\text{lat}^{\pi_\text{rollout}}$ of executing
$\pi_\text{rollout}$ on Lat-POMDP using rollouts. Approximating a lower bound
with a \emph{rollout policy} is commonly used in POMCP~\cite{silver2010monte},
DESPOT~\cite{somani2013despot}, and other online POMDP solvers.

\section{Experimental Results}
\label{sec:simulation}
We validated the efficacy of the proposed algorithm by running simulation
experiments on a robot equipped with a 7-DOF Barrett WAM
arm~\cite{salisbury1988preliminary} and the
BarrettHand~\cite{townsend2000barretthand} end-effector. The robot attempts to
push a bottle into the center of its palm on a table littered with obstacles.

\subsection{Problem Definition}
The state space of the problem consists of the configuration space $Q =
\mathbb{R}^7$ of the robot and the pose of the object $\Xo$ relative to the end
effector. The robot begins in a known initial configuration $q_0$ and the
initial pose of the bottle $\xo$ is drawn from a Gaussian distribution centered
in front of the palm with a covariance matrix of
$\Sigma^{1/2} = \diag [ 5~\text{mm}, 10~\text{cm} ]$. At each timestep, the robot
chooses an action $\alattice$ that moves $1~\text{cm}$ at a constant Cartesian
velocity in the $xy$-plane, receives an observation $o$ from its $\no = 2$
fingertip contact sensors, and receives a reward $R$.

The goal is to push the object into the $4~\text{ cm} \times 6~\text{ cm}$ goal
region $\Xgoal$ centered in front of the palm. We evaluate the performance of
the policy by: (1) computing the probability that the object is in the goal
region at each timestep and (2) computing the discounted sum of reward with
$\gamma = 0.99$. If the robot takes an infeasible action, the simulation
immediately terminates and the robot receives $-1$ reward for all remaining
timesteps

\subsubsection{Transition Model}
The motion of the object is assumed to be
quasistatic~\cite{lynch1992manipulation} and is simulated using the Box2D
physics simulator~\cite{catto2010box2d}. We simulate uncertainty in the model
by sampling the hand-object friction coefficient and center of the object-table
pressure distribution from Gaussian distributions at each
timestep~\cite{duff2010motion,koval2015manifold_ijrr}.

\subsubsection{Observation Model}
After each timestep the robot receives a binary observation from a contact
sensor on each of its fingertips. We assume that the sensors perfectly
discriminate between contact and
no-contact~\cite{koval2015manifold_ijrr,koval2015precontact_ijrr}, but provide
no additional information about where contact occurred on the sensor. The robot
must take information-gathering actions, by moving side-to-side, to localize
the object relative to the hand.

\subsubsection{Rel-POMDP Discretization}
\label{sec:simulation-discretization}
We discretize $\Srel$ with a 1~cm resolution over a region of size
$20~\text{cm} \times 44~\text{cm}$ centered around the palm. We compute a
transition model, observation model, and reward function for Rel-POMDP by
taking the expectation over the continuous models by assuming that each
discrete state represents a uniform distribution over the corresponding area of
the continuous state space. We use these discrete models in both our Rel-POMDP
and Lat-POMDP experiments.

States that leave the discretized region around the hand are treated as
un-modelled contact. As in prior
work~\cite{horowitz2013interactive,koval2015precontact_ijrr}, discretization is
necessary to speed up evaluation of the model and to enable calculation of the
QMDP and SARSOP baseline policies described below.

\subsection{State Lattice Construction}
The robot's actions tessellate $\Xr$ into a lattice centered at
$\Tabs{\text{ee}}(q_0)$ with a resolution of $\Delta \xr = \Delta \yr =
1~\text{cm}$. First, we select a configuration $\qlattice(\xr)$ using an
iterative inverse kinematics solver initialized with the solution of an
adjacent lattice point. Then, we use a Cartesian motion planner to find a
trajectory that connects adjacent points while satisfying the constraints
imposed by the action templates. In most configurations, the 1~cm lattice is
sufficiently dense to simply connect adjacent lattice points with a
straight-line trajectory in configuration space. Forward kinematics, inverse
kinematics, and collision detection is provided by the Dynamic Animation and
Robotics Toolkit (DART)~\cite{unknown2015dynamic}.

As described in \cref{sec:planner-lattice}, the kinematic structure of the
lattice is computed offline for the height of the support surface, but no
collision checking is performed. Collision checks are deferred to runtime when
the planner queries the feasibility of an action template. At that point, the
edge is checked for collision against the environment using the Flexible
Collision Library (FCL)~\cite{pan2012fcl}.

\subsection{Policies}
We compare the quality of the policy produced by the proposed algorithm
(Lat-DESPOT) against several policies:

\subsubsection{Rel-QMDP}
Choose the action at each timestep that greedily optimizes the single-step
$Q$-value of the MDP value function associated from
Rel-POMDP~\cite{littman1995learning}. QMDP is optimal if all uncertainty
disappears after executing one action and, as a result, does not perform
multi-step information-gathering actions. QMDP is commonly used in robotic
applications due to its speed, simplicity, and good performance in domains
where information is easily
gathered~\cite{emery2004approximate,javdani2015shared}.

\subsubsection{Rel-SARSOP}
Compute a near-optimal policy for Rel-POMDP using SARSOP, an offline
point-based method~\cite{kurniawati2008sarsop}. Rel-SARSOP serves
as a baseline to demonstrate that information-gathering is beneficial and to
verify that DESPOT is capable of finding a near-optimal solution. SARSOP has
been shown to perform well on Rel-POMDP in prior
work~\cite{horowitz2013interactive,koval2015precontact_ijrr}. As in that work,
we used the implementation of SARSOP provided by the APPL toolkit and allowed it
to run for 10~minutes.

\subsubsection{Rel-DESPOT}
Use DESPOT to find a near-optimal policy for Rel-POMDP~\cite{somani2013despot}.
The solver uses Rel-QMDP as an upper bound and rollouts of Rel-QMDP as a lower
bound (see \cref{sec:planner-lowerbound} for an explanation of a rollout
policy). We use the implementation of DESPOT provided by the APPL toolkit and
tuned the number of trials, number of scenarios, regularization constant, and
gap constant on a set of training problem instances that are distinct from the
results presented in this section.

\subsubsection{Lift-QMDP and Lift-SARSOP}
Use the state lattice to evaluate the feasibility of the action returned by
Rel-QMDP and Rel-SARSOP, respectively, before executing it. If the desired
action is infeasible, instead execute the action with the highest estimated
$Q$-value. This represents a heuristic solution for modifying a Rel-POMDP
policy to avoid taking infeasible actions.

\subsubsection{Lat-DESPOT}
The proposed algorithm described in \cref{sec:planner}. We run
DESPOT~\cite{somani2013despot} on Lat-POMDP using Rel-QMDP as the upper bound
and rollouts of Lift-QMDP as the lower bound (see \cref{sec:planner-lowerbound}
for an explanation of a rollout policy). Just as with Rel-DESPOT, we used the
implementation of DESPOT provided by the APPL toolkit and tuned all parameters
on a set of training problem instances.

\subsection{Rel-POMDP Experiments}
\label{sec:simulation-rel}
\begin{figure}[t]%
  \centering%
  \includegraphics{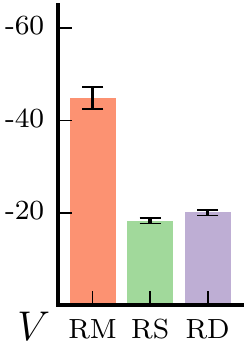}%
  \includegraphics{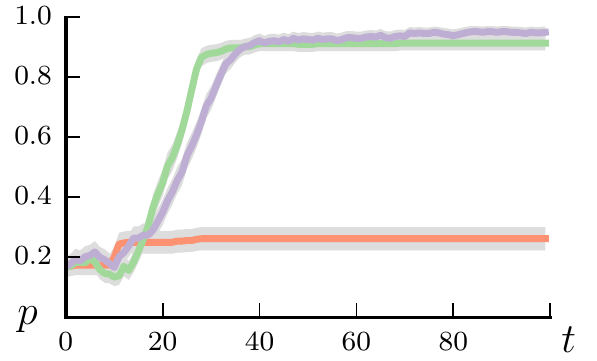}%
  \caption[]{%
    Performance of the 
    Rel-QMDP (RM \mdpswatch), 
    Rel-SARSOP (RS \sarsopswatch), and
    Rel-DESPOT (RD \despotswatch) policies on Rel-POMDP.
    (Left)~Value $V_\text{rel}$ achieved by each policy computed over 100
    timesteps. Note that the $y$-axis is inverted; lower (less negative) is
    better.
    (Right)~Probability that the movable object is in $\Xgoal$ as a function of
    timestep.
    Error bars denote a 95\% confidence interval. Best viewed in color.
  }%
  \label{fig:simulation-rel}%
\end{figure}

We begin with a preliminary experiment on Rel-POMDP. Our goal is to: (1)
validate that discretizing Rel-POMDP does not harm the performance of the
optimal policy on the underlying continuous problem, (2) confirm that
information-gathering is necessary, (3) verify that Rel-DESPOT does not perform
worse than Rel-SARSOP, and (4) estimate $\Vrel^*[b_0]$.

\subsubsection{Discretization Validation}
\Cref{fig:simulation-rel} shows simulation results averaged over
$500$~instances of the experiment described above.
\Cref{fig:simulation-rel}-Left shows the value of each policy achieved in
100~timesteps on simulated on the discretized Rel-POMDP problem described in
\cref{sec:simulation-discretization}. \Cref{fig:simulation-rel}-Right shows the
probability that the movable object is in $\Xgoal$ at each timestep when
simulated using the continuous model. The close agreement between the results
suggests that discretizing the state space does not harm the performance of the
policy on the underlying continuous problem.

\subsubsection{Necessity of Information-Gathering}
Rel-QMDP (\mdpswatch) performs poorly on this problem, achieving $<30\%$
success probability, because it pushes straight and does not attempt to
localize the object. Rel-SARSOP (\sarsopswatch) and Rel-DESPOT (\despotswatch)
execute information-gathering action by moving the hand laterally to drive the
movable object into one of the fingertip contact sensors. Once the object has
been localized, the policy successfully pushes it into the goal region. These
results are consistent with prior
work~\cite{horowitz2013interactive,koval2015precontact_ijrr} and confirm that
information-gathering is necessary to perform well on this problem.

\subsubsection{Rel-DESPOT Policy}
\label{sec:simulation-h2}
Our intuition is that it is more difficult to solve Lat-POMDP than Rel-POMDP.
Therefore, it is important that we verify that DESPOT can successfully solve
Rel-POMDP before applying it to Lat-POMDP. Our results confirm this is true:
Rel-DESPOT (\despotswatch) achieves comparable value and success probability to
Rel-SARSOP (\sarsopswatch).

\subsection{Lat-POMDP Experiments}
\begin{figure*}[t]%
  \centering%
  \begin{subfigure}[t]{0.24\textwidth}%
    \centering%
    \includegraphics[width=\textwidth]{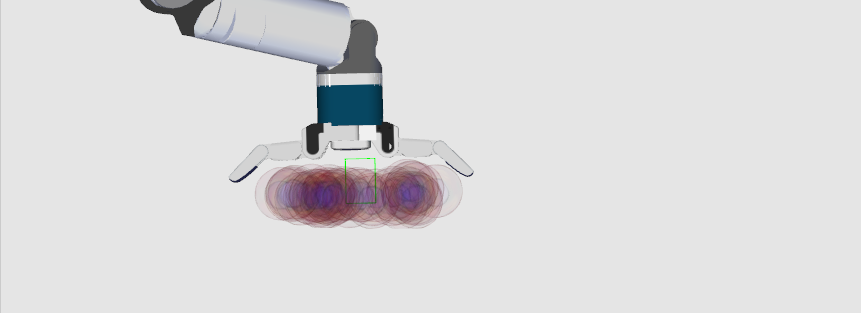} \\[1ex]
    \raggedleft%
    \includegraphics{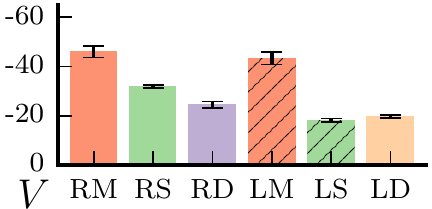} \\[1ex]
    \includegraphics{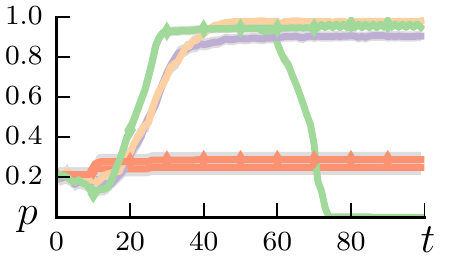} \\
    \includegraphics{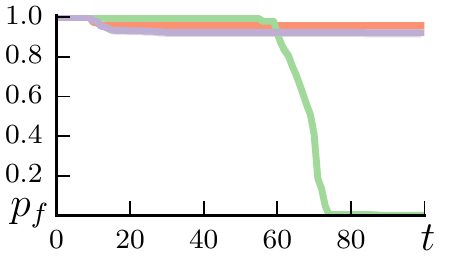} \\
    \caption{Empty Table}%
    \label{fig:simulation-lat-empty}%
  \end{subfigure}%
  \hspace{0.01\textwidth}%
  \begin{subfigure}[t]{0.24\textwidth}%
    \centering%
    \includegraphics[width=\textwidth]{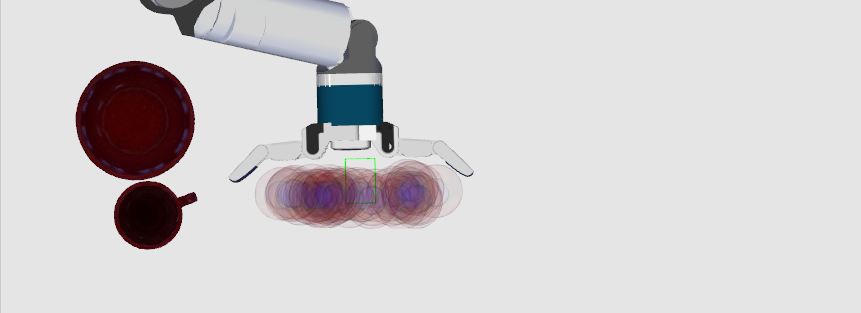} \\[1ex]
    \raggedleft%
    \includegraphics{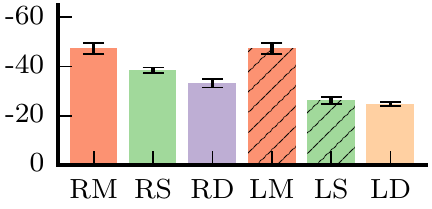} \\
    \vspace{0.95ex}%
    \includegraphics{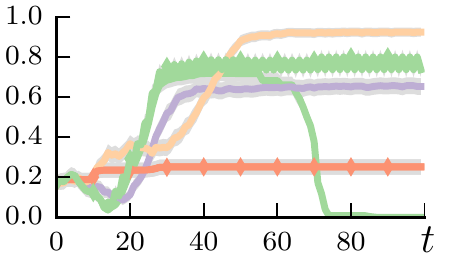} \\
    \includegraphics{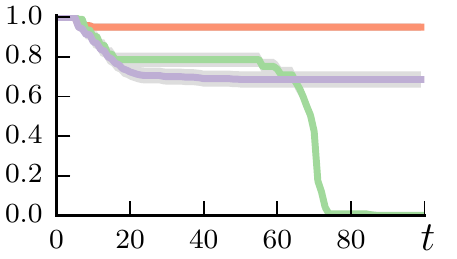} \\
    \caption{Right Obstacles}%
    \label{fig:simulation-lat-right}%
  \end{subfigure}%
  \hspace{0.01\textwidth}%
  \begin{subfigure}[t]{0.24\textwidth}%
    \centering%
    \includegraphics[width=\textwidth]{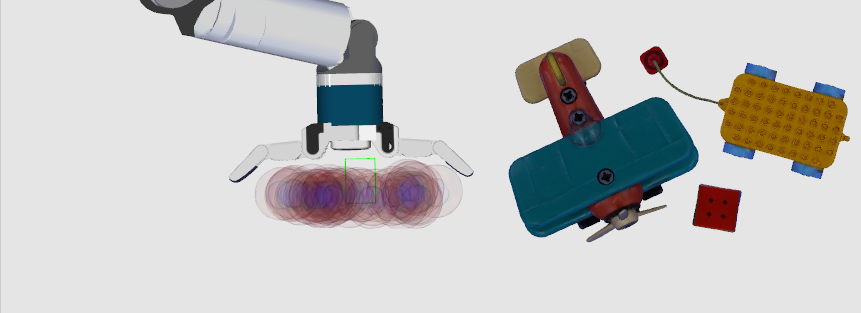} \\[1ex]
    \raggedleft%
    \includegraphics{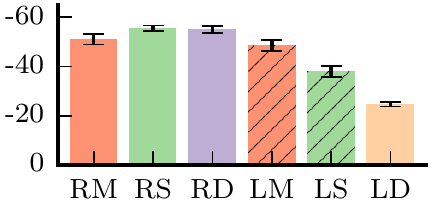} \\
    \vspace{0.95ex}%
    \includegraphics{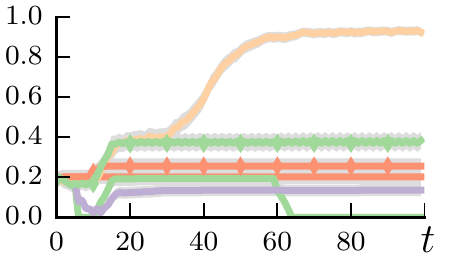} \\
    \includegraphics{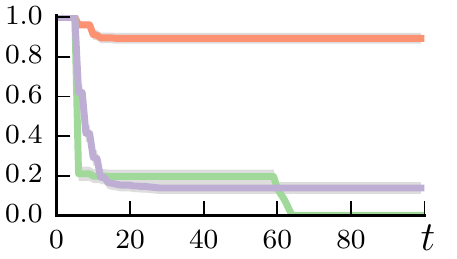} \\
    \caption{Left Obstacles}%
    \label{fig:simulation-lat-left}%
  \end{subfigure}%
  \hspace{0.01\textwidth}%
  \begin{subfigure}[t]{0.24\textwidth}%
    \centering%
    \includegraphics[width=\textwidth]{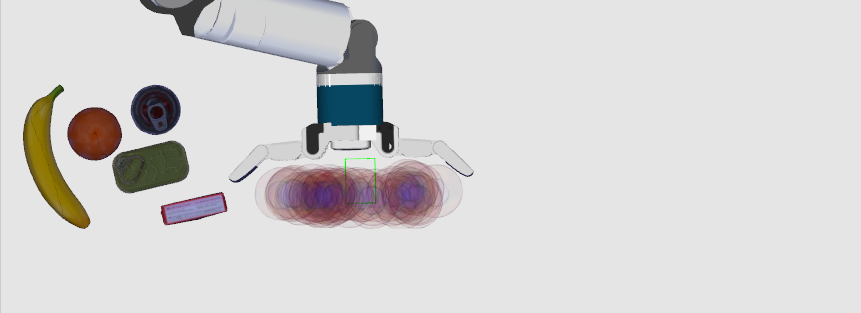} \\[1ex]
    \raggedleft%
    \includegraphics{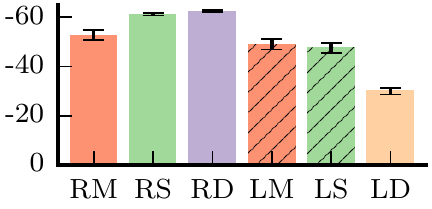} \\
    \vspace{0.95ex}%
    \includegraphics{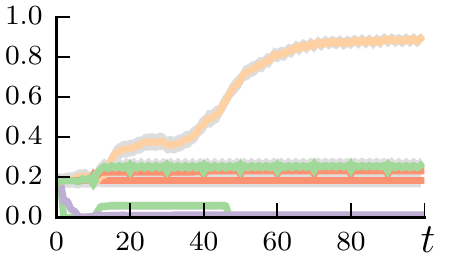} \\
    \includegraphics{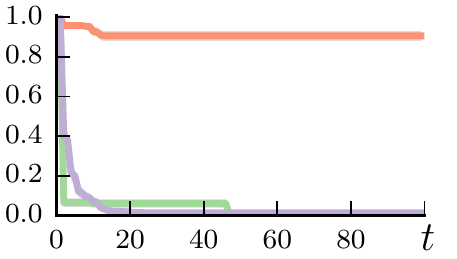} \\
    \caption{Complex Obstacles}%
    \label{fig:simulation-lat-complex}%
  \end{subfigure}%
  \caption[]{%
    Performance of the Rel-QMDP (RM \mdpswatch), Rel-SARSOP (RS \sarsopswatch),
    Rel-DESPOT (RD \despotswatch), Lift-QMDP (LM \liftmdpswatch), Lift-SARSOP
    (LS \liftsarsopswatch), and Lat-DESPOT (LD \latdespotswatch) policies on
    Lat-POMDP on four environments.
    (Top)~Value achieved by each policy after 100 timesteps. Note that the
    $y$-axis is inverted; lower (less negative) is better.
    (Middle)~Probability $p = \Prob(s_t \in \Xgoal)$ that the movable object is
    in the goal region at each timestep.
    (Bottom)~Probability that the execution is feasible as a function of time.
    Lift-QMDP, Lift-SARSOP, and Lat-DESPOT are omitted because they do not take
    infeasible actions.
    In all cases, error bars denote a 95\% confidence interval.
    Best viewed in color.
  }%
  \label{fig:simulation-lat}%
\end{figure*}

We evaluate the proposed approach (Lat-DESPOT) on Lat-POMDP in four different
environments: (a)~an empty table, (b)~obstacles on the right, (c)~obstacles on
the left, and (d)~more complex obstacles on the right. Note that kinematic
constraints are present in all four environments, even the empty table, in the
form of reachability limits, self-collision, and collision between the arm and
the table. Scenes (b), (c), and (d) are constructed out of objects selected
from the YCB dataset~\cite{calli2015ycb}.

\Cref{fig:simulation-lat} shows results for each scene averaged over
$500$~instances of the problem. Just as in the Rel-POMDP experiments,
\Cref{fig:simulation-lat}-Top shows the value $\Vlattice$ achieved by each
policy when evaluated on the discretized version of Lat-POMDP.
\Cref{fig:simulation-lat}-Middle shows the probability that the movable object
is in $\Xgoal$ at each timestep, treating instances that have terminated as
zero probability. \Cref{fig:simulation-lat}-Bottom shows the proportion of
Rel-QMDP, Rel-SARSOP, and Rel-DESPOT policies that are active at each timestep;
i.e. have not yet terminated by taking an infeasible action. Note that all
six policies---even Rel-MDP, Rel-SARSOP, and Rel-DESPOT---are subject to
kinematic constraints during execution.

\subsubsection{Baseline Policies}
Rel-QMDP (RM \mdpswatch) and Lift-QMDP (LM \liftmdpswatch) to perform poorly
across all environments, achieving $<30\%$ success probability, because they do
not take multi-step information-gathering actions.  \Cref{fig:simulation-lat}
confirms this: both QMDP policies perform poorly on all four environments.

Rel-SARSOP (RS \sarsopswatch) and Rel-DESPOT (RD \despotswatch) perform well on
environments (a) and (b) because they hit obstacles late in execution. The
converse is true on environments (c) and (d): both policies hit obstacles so
quickly that they perform worse than Rel-QMDP!

Lift-SARSOP (LS \liftsarsopswatch) performs near-optimally on environments (a)
and (b) because it: (1) does not take infeasible actions and (2) gathers
information. However, it performs no better than Rel-QMDP on problem (d). This
occurs because Lift-SARSOP myopically considers obstacles in a one-step
lookahead and may oscillate when blocked. Small changes in the environment are
sufficient to induce this behavior: the key difference between environments (b)
and (d) is the introduction of a red box that creates a cul-de-sac in the
lattice.

\subsubsection{Lat-DESPOT Policy}
Our proposed approach, Lat-DESPOT (\latdespotswatch), avoids myopic behavior by
considering action feasibility during planning.  Lat-DESPOT performs no worse
than Lift-SARSOP on environments (a) and (b) and outperforms it on environments
(c) and (d). Unlike Rel-SARSOP, Lat-DESPOT identifies the cul-de-sac in (d)
during planning and avoids becoming trapped in it. In summary, \emph{Lat-DESPOT
is the only policy that performs near-optimally on all four environments.}

Our unoptimized implementation of Lat-DESPOT took between 200~$\mu\text{s}$ and
2.4~s to select an action (a 1~cm motion of the end-effector) on a single core
of a 4 GHz Intel Core i7 CPU. The policy was slowest to evaluate early in
execution, when information-gathering is necessary, and fastest once the
movable object is localized; i.e. once the upper and lower bounds become tight.
The QMDP and SARSOP policies, which perform no planning online, took an average
of 1.6~$\mu\text{s}$ and 218~$\mu\text{s}$ respectively. We are optimistic
about achieving real-time performance from Lat-DESPOT by parallelizing scenario
rollouts, optimizing our implementation of the algorithm, and leveraging the
loose coupling between the optimal pre- and post-contact
policies~\cite{koval2015precontact_ijrr}.

\subsubsection{Upper Bound Validation}
Finally, we combine the data in \cref{fig:simulation-rel}-Left and
\cref{fig:simulation-lat}-Top to empirically verify the bound we proved in
\cref{thm:planner-upperbound}; i.e. the optimal value function $\Vrel^*$ of
Rel-POMDP is an upper bound on the optimal value function $\Vlattice^*$ of
Lat-POMDP. Note that the value of Rel-SARSOP (\sarsopswatch) and Rel-DESPOT
(\despotswatch) on Rel-POMDP (\cref{fig:simulation-rel}-Left) are greater (i.e.
less negative) than the value of all policies we evaluated on Lat-POMDP
(\cref{fig:simulation-lat}-Top). The data supports our theory: the optimal
value achieved on Rel-POMDP is no worse than the highest value achieved on
Lat-POMDP in environment (a) and greater than the highest value achieved in
environments (b), (c), and (d).

\section{Discussion}
\label{sec:discussion}
In this paper, we formulated the problem of planar contact manipulation under
uncertainty as a POMDP in the joint space of robot configurations and poses of
the movable object (\cref{sec:intro}). We simplify the problem by constructing
a lattice in the robot's configuration space and prove that, under mild
assumptions, the optimal policy of Lat-POMDP will never take an infeasible
action (\cref{sec:lattice}). We find a near-optimal policy for Lat-POMDP using
DESPOT~\cite{somani2013despot}, a state-of-the-art online POMDP solver, guided by
upper and lower bounds derived from Rel-POMDP (\cref{sec:planner}).

Our simulation results show that Lat-DESPOT outperforms five baseline
algorithms on cluttered environments: it achieves a $>90\%$ success rate on all
environments, compared to the best baseline (Lift-SARSOP) that achieves only a
$~20\%$ success rate on difficult problems. They also highlight the importance
of reasoning about both object pose uncertainty and kinematic constraints
during planning. Lat-DESPOT is a promising first step towards using recent
advances in online POMDP solvers, like DESPOT~\cite{somani2013despot}, to
achieve that goal. However, Lat-DESPOT has several limitations that we plan to
address in future work:

First, our approach assumes that the robot has perfect proprioception and
operates in an environment with known obstacles. In practice, robots often
have imperfect proprioception~\cite{klingensmith2013closed,boots2014learning}
and uncertainty about the pose of \emph{all} objects in the environment. We
hope to relax both of these assumptions by replacing the deterministic
transition model for robot configuration with a stochastic model that considers
the probability of hitting an obstacle.

Second, we are excited to scale our approach up a larger repertoire of action
templates (including non-planar motion), solving more complex tasks, and
planning in environments that contain multiple movable objects. Solving these
more complex problems will require more informative heuristics. We are
optimistic that more sophisticated Rel-POMDP policies, e.g. those computed by a
point-based method~\cite{porta2006point} or Monte Carlo Value
Iteration~\cite{bai2011monte}, could be used to guide the search.

Third, our approach commits to a single inverse kinematics solution
$\qlattice(\xr)$ for each lattice point. This prevents robots like HERB, which
has a seven degree-of-freedom manipulator, from using redundancy to
avoid kinematic constraints. We plan to relax this assumption in future
work by generating multiple inverse kinematic solutions for each lattice point
and instantiating an action template for each. Our intuition is that many
solutions share the same connectivity and, thus, may be treated identically
during planning.

Finally, we plan to implement Lat-DESPOT on a real robotic manipulator and
evaluate the performance of our approach on real-world manipulation tasks.

\section*{Acknowledgements}
\noindent This work was supported by a NASA Space Technology Research
Fellowship (award NNX13AL62H), the National Science Foundation (awards
IIS-1218182 and IIS-1409003), the U.S. Office of Naval Research, and the Toyota
Motor Corporation. We would like to thank Rachel Holladay, Shervin Javdani,
Jennifer King, Stefanos Nikolaidis, and the members of the Personal Robotics
Lab for their helpful input. We would also like to thank Nan Ye for assistance
with the APPL toolkit.

\vfill\break

\bibliographystyle{plainnat}
\bibliography{references}

\end{document}